\documentclass[letterpaper, 10 pt, conference]{class/ieeeconf}  

\IEEEoverridecommandlockouts                              
\usepackage{xcolor}
\usepackage{siunitx}  
\usepackage{graphicx}
 \graphicspath{./figures/robot_demonstration}

\usepackage{amsmath}
\usepackage{amssymb}
\usepackage{latexsym}
\usepackage{url}
\usepackage{cite}
\usepackage{relsize}
\usepackage{multirow}
\usepackage{afterpage}
\usepackage{ifthen}
\usepackage{graphicx}
\usepackage{algpseudocode,algorithm,algorithmicx}
\usepackage{subfigure}
\usepackage{epstopdf}
\usepackage{stackengine}
\usepackage{textcomp} 

\usepackage{gensymb}
\usepackage{booktabs}
\usepackage{makecell}
\usepackage{hhline}
\usepackage{courier}
\usepackage{lipsum}
\usepackage{bm}
\usepackage{flushend}

\providecommand{\Comma}{\text{~,\xspace}}

\usepackage{xcolor} 
            
\makeatletter
\let\NAT@parse\undefined
\makeatother
\usepackage[bookmarks=false, linkcolor=blue, urlcolor=blue, citecolor=blue]{hyperref} 

\hypersetup{
    colorlinks=true,
    linkcolor=red,
    filecolor=magenta,      
    urlcolor=blue,
    pdfstartview={FitH},
    citecolor =blue
    }
    
\usepackage{tikz}
\usetikzlibrary{arrows.meta,positioning,fit,backgrounds,calc}

\DeclareMathOperator*{\argmin}{arg\,min}
\usepackage{xspace}

\providecommand{\kuka}{\textsc{KUKA} LBR iiwa R820\xspace}

\graphicspath{{./figures/}}

\title{\LARGE \bf 
Coupled Routing and Configuration Optimization for Multi-Viewpoint Robotic Inspection}

\author{Minh Nhat Vu$^{1,2}$, Khang Nguyen$^{2,3}$, Vu Trung Tran$^{2}$, Vien Ngo$^{2}$
\thanks{$^{1}$ Center for Artificial Intelligence Research (CAIR), VinUniversity, Hanoi, Vietnam {\tt\small minh.vn@vinuni.edu.vn},}%
\thanks{$^{2}$ VinRobotics JSC, Hanoi, Vietnam,
}%
\thanks{$^{3}$ Division of Computer and Mathematical Sciences, Mohamed bin Zayed University of Artificial Intelligence (MBZUAI), Abu Dhabi, UAE. 
}
}

\begin{document}

\newtheorem{problem}{Problem}
\newtheorem{lemma}{Lemma}
\newtheorem{theorem}[lemma]{Theorem}
\newtheorem{claim}{Claim}
\newtheorem{corollary}[lemma]{Corollary}
\newtheorem{definition}[lemma]{Definition}
\newtheorem{proposition}[lemma]{Proposition}
\newtheorem{remark}[lemma]{Remark}
\newenvironment{LabeledProof}[1]{\noindent{\it Proof of #1: }}{\qed}

\def\beq#1\eeq{\begin{equation}#1\end{equation}}
\def\bea#1\eea{\begin{align}#1\end{align}}
\def\beg#1\eeg{\begin{gather}#1\end{gather}}
\def\beqs#1\eeqs{\begin{equation*}#1\end{equation*}}
\def\beas#1\eeas{\begin{align*}#1\end{align*}}
\def\begs#1\eegs{\begin{gather*}#1\end{gather*}}

\newcommand{\poly}{\mathrm{poly}}
\newcommand{\eps}{\epsilon}
\newcommand{\e}{\epsilon}
\newcommand{\polylog}{\mathrm{polylog}}
\newcommand{\rob}[1]{\left( #1 \right)} 
\newcommand{\sqb}[1]{\left[ #1 \right]} 
\newcommand{\cub}[1]{\left\{ #1 \right\} } 
\newcommand{\rb}[1]{\left( #1 \right)} 
\newcommand{\abs}[1]{\left| #1 \right|} 
\newcommand{\zo}{\{0, 1\}}
\newcommand{\zonzo}{\zo^n \to \zo}
\newcommand{\zokzo}{\zo^k \to \zo}
\newcommand{\zot}{\{0,1,2\}}
\newcommand{\en}[1]{\marginpar{\textbf{#1}}}
\newcommand{\efn}[1]{\footnote{\textbf{#1}}}
\newcommand{\vecbm}[1]{\boldmath{#1}} 
\newcommand{\uvec}[1]{\hat{\vec{#1}}}
\newcommand{\thv}{\vecbm{\theta}}
\newcommand{\junk}[1]{}
\newcommand{\var}{\mathop{\mathrm{var}}}
\newcommand{\rank}{\mathop{\mathrm{rank}}}
\newcommand{\diag}{\mathop{\mathrm{diag}}}
\newcommand{\tr}{\mathop{\mathrm{tr}}}
\newcommand{\acos}{\mathop{\mathrm{acos}}}
\newcommand{\atantwo}{\mathop{\mathrm{atan2}}}
\newcommand{\SVD}{\mathop{\mathrm{SVD}}}
\newcommand{\quadf}{\mathop{\mathrm{q}}}
\newcommand{\linterp}{\mathop{\mathrm{l}}}
\newcommand{\sgn}{\mathop{\mathrm{sign}}}
\newcommand{\sym}{\mathop{\mathrm{sym}}}
\newcommand{\avg}{\mathop{\mathrm{avg}}}
\newcommand{\mean}{\mathop{\mathrm{mean}}}
\newcommand{\erf}{\mathop{\mathrm{erf}}}
\newcommand{\grad}{\nabla}
\newcommand{\R}{\mathbb{R}}
\newcommand{\defeq}{\triangleq}
\newcommand{\dims}[2]{[#1\!\times\!#2]}
\newcommand{\sdims}[2]{\mathsmaller{#1\!\times\!#2}}
\newcommand{\udims}[3]{#1}
\newcommand{\udimst}[4]{#1}
\newcommand{\com}[1]{\rhd\text{\emph{#1}}}
\newcommand{\ind}{\hspace{1em}}
\newcommand{\floor}[1]{\left\lfloor{#1}\right\rfloor}
\newcommand{\step}[1]{\vspace{0.5em}\noindent{#1}}
\newcommand{\quat}[1]{\ensuremath{\mathring{\mathbf{#1}}}}
\newcommand{\norm}[1]{\left\lVert#1\right\rVert}
\newcommand{\ignore}[1]{}
\newcommand{\specialcell}[2][c]{\begin{tabular}[#1]{@{}c@{}}#2\end{tabular}}
\newcommand*\Let[2]{\State #1 $\gets$ #2}
\newcommand{\algorithmicbreak}{\textbf{break}}
\newcommand{\Break}{\State \algorithmicbreak}
\newcommand{\ra}[1]{\renewcommand{\arraystretch}{#1}}

\renewcommand{\vec}[1]{\mathbf{#1}} 

\algdef{S}[FOR]{ForEach}[1]{\algorithmicforeach\ #1\ \algorithmicdo}
\algnewcommand\algorithmicforeach{\textbf{for each}}
\algrenewcommand\algorithmicrequire{\textbf{Require:}}
\algrenewcommand\algorithmicensure{\textbf{Ensure:}}
\algnewcommand\algorithmicinput{\textbf{Input:}}
\algnewcommand\INPUT{\item[\algorithmicinput]}
\algnewcommand\algorithmicoutput{\textbf{Output:}}
\algnewcommand\OUTPUT{\item[\algorithmicoutput]}

\maketitle
\thispagestyle{empty}
\pagestyle{empty}

\begin{abstract}
We present a unified framework that turns a set of 6-DoF inspection viewpoints into a time-optimal, collision-free route for a 9-DoF robotic system. Unlike modular pipelines that fix a single inverse-kinematics (IK) configuration per viewpoint, build an all-pairs travel-time map, and then route, our method jointly optimizes the visiting order and the per-viewpoint configuration in a single global search. The three-dimensional self-motion manifold of each viewpoint is parameterized in closed form so that the pose constraint holds by construction, the rest-to-rest travel time is approximated by a closed-form admissible double-integrator surrogate, and the tour is encoded by random keys. A derivative-free optimizer (CMA-ES) minimizes a cheap penalized objective over order and configuration, after which direct-collocation trajectory optimization is applied only to the edges of the selected route to certify dynamic feasibility and torque limits, and to return exact timings. This reduces the trajectory solves from quadratic to linear in the number of viewpoints and removes the decoupling that prevents modular pipelines from being globally time-optimal. Simulations and real-robot experiments on a KUKA LBR iiwa with a 2-DoF linear stage validate feasibility, smooth execution, and reduced end-to-end inspection time relative to modular and naive distance-based baselines.
\end{abstract}


\section{INTRODUCTION} \label{Sec:Intro}
Effectively performing a sequence of manipulation tasks on robots with high degrees of freedom (DoF), e.g., mobile manipulators for surface inspection, is challenging due to the integration of multiple systems, such as task and trajectory planning, and control. 
Many industrial tasks can be reduced to visiting a large set of 6-DoF poses in some order. Unlike assembly or stacking, which follow a fixed sequence, inspection, sorting, and surface finishing leave the order free, and in many cases, the chosen sequence strongly affects throughput.

In a typical object-scanning task, on the order of $100$ viewpoints in 6 DoFs (position and orientation) must be visited by a manipulator, and the challenge is to route through all of them in minimum time under the nonlinear mapping between task and joint space. Such inspection enables paradigms such as zero-defect and flexible manufacturing, in which each product, possibly customized, may require an individual plan generated automatically from CAD data.

In this paper, we depart from the common two-stage recipe of first fixing a configuration per viewpoint and then routing. The nonlinear task-to-joint mapping means that two viewpoints close in task space need not be close in configuration space, so the configuration chosen at a viewpoint should depend on its neighbors in the route. We therefore parameterize the self-motion manifold of each viewpoint in closed form, approximate the rest-to-rest travel time by a closed-form surrogate of the linearized dynamics, and optimize the visiting order together with the per-viewpoint redundancy in a single global search using CMA-ES. The expensive direct-collocation trajectory optimization, which considers obstacles such as the scanned object, virtual walls, and the table, is then run only along the edges of the selected route to certify dynamic feasibility and torque limits, and to return exact timings.

Our contributions are threefold. (i) We formulate multi-viewpoint inspection as a single joint optimization over the visiting order and the per-viewpoint configuration, removing the decoupling of IK, cost-map construction, and routing that prevents modular pipelines from being globally time-optimal. (ii) We make this tractable through a closed-form, redundancy-parameterized IK that satisfies each pose by construction, a closed-form admissible travel-time surrogate from the linearized dynamics, and a random-key encoding that lets one derivative-free optimizer (CMA-ES) search order and configuration together. (iii) We apply direct-collocation trajectory optimization only to the $M-1$ edges of the selected route, reducing trajectory solves from quadratic to linear in the number of viewpoints while certifying dynamic feasibility and torque limits.

\section{Related Work} \label{Sec:rw}
Sampling-based planners such as RRT~\cite{lavalle2001rapidly} offer probabilistic completeness but require solving a two-point boundary value problem that is tractable only for a limited set of systems. Optimization-based planning instead encodes dynamics, state and input bounds, and collision avoidance directly, producing smooth trajectories without post-processing at the cost of a nonlinear, nonconvex problem; point-to-point schemes can also shape secondary objectives such as manipulability~\cite{beck2022singlularity}.

Beyond a single point-to-point motion, inspection requires ordering many viewpoints. For mobile manipulators, most methods place the base to maximize manipulability and then sequence the motions~\cite{xu2020planning}, or cast the order as a traveling salesman problem solved by simulated annealing~\cite{harada2015base}; these typically separate base and arm planning to reduce complexity. Liu et al.~\cite{liu2020optimal} compute optimal paths through given measurement points in two steps: a time matrix followed by routing, but only in Cartesian 3D space. In contrast, we plan over the full 9-DoF system with its dynamics and, crucially, do not fix the per-viewpoint configuration before routing.


\section{Modeling}
\label{sec: modeling}

\subsection{Hardware}
This work considers a mobile-like manipulator task of scanning objects in a cluttered environment. 
The system has 9 DoF, consisting of the 7-DoF \kuka attached to a 2-DoF linear axis system that can move in the $x$- and $ y$-directions. 
The task involves planning an optimal route of smooth, collision-free, and dynamically feasible trajectories passing all given 6-DoF viewpoints (surrounding the objects) in optimal order. 
Since the working environment is restricted, we consider two virtual walls (see Fig. \ref{fig: obstacle avoidance}) as obstacles. 

\begin{figure}[h]
\centering
\scalebox{0.62}{
\begin{tikzpicture}[>=latex]
\tikzstyle{frame}=[circle, radius=1]
\coordinate (W)  at ( 3.00, 4.50);
\coordinate (L1) at ( 0.67, 2.94);
\coordinate (L2) at ( 0.00, 3.51);
\coordinate (L3) at ( 0.00, 2.83);
\coordinate (L4) at (-0.02, 2.01);
\coordinate (L5) at (-0.05, 0.93);
\coordinate (L6) at (-0.08, 0.03);
\coordinate (L7) at (-0.11,-1.11);
\coordinate (L8) at (-0.14,-2.22);
\coordinate (L9) at (-0.17,-2.77);
\coordinate (C)  at (-1.25,-3.80);

\node[frame] (OW) at (W) {};
\node[frame] (OL1) at (L1) {};
\node[frame] (OL2) at (L2) {};
\node[frame] (OL3) at (L3) {};
\node[frame] (OL4) at (L4) {};
\node[frame] (OL5) at (L5) {};
\node[frame] (OL6) at (L6) {};
\node[frame] (OL7) at (L7) {};
\node[frame] (OL8) at (L8) {};
\node[frame] (OL9) at (L9) {};
\node[frame] (OC) at (C) {};

\node at (0,0) {\includegraphics[height=9cm]{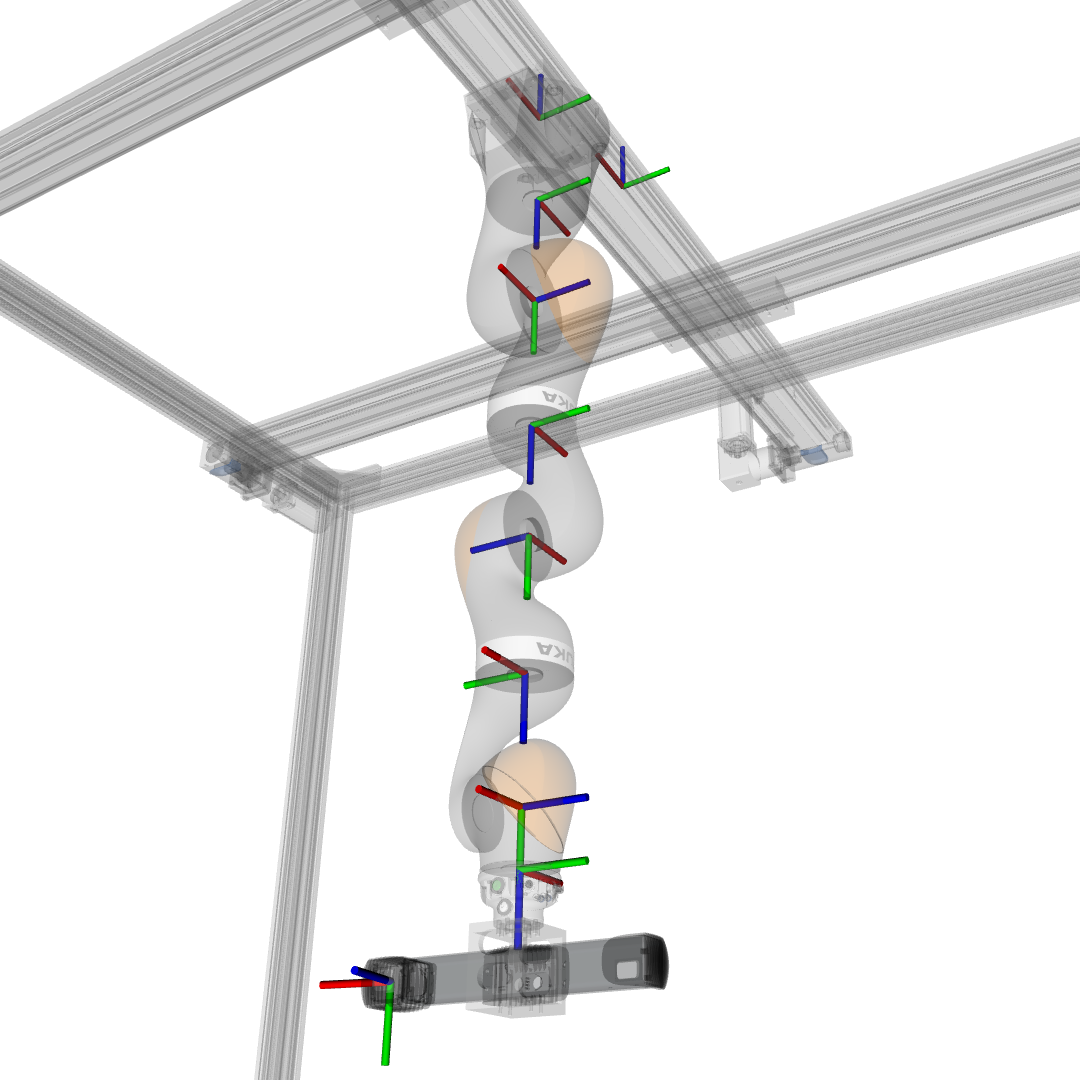}};

\path (OW) edge[->, out=260, in=350, looseness=1] node[pos=0.2, anchor=150]{$q_x$} (OL1);
\path (OL1) edge[->, out=60, in=60, looseness=2] node[pos=0.3, anchor=210]{$q_y$} (OL2);
\path (OL2) edge[->, out=180, in=180, looseness=2] node[pos=0.5, anchor=0]{$q_1$} (OL3);
\path (OL3) edge[->, out=0, in=350, looseness=2] node[pos=0.5, anchor=180]{$q_2$} (OL4);
\path (OL4) edge[->, out=190, in=150, looseness=2] node[pos=0.4, anchor=0]{$q_3$} (OL5);
\path (OL5) edge[->, out=0, in=30, looseness=2] node[pos=0.7, anchor=170]{$q_4$} (OL6);
\path (OL6) edge[->, out=220, in=170, looseness=2] node[pos=0.5, anchor=0]{$q_5$} (OL7);
\path (OL7) edge[->, out=0, in=40, looseness=2] node[pos=0.5, anchor=180]{$q_6$} (OL8);
\path (OL8) edge[->, out=180, in=180, looseness=2] node[pos=0.5, anchor=0]{$q_7$} (OL9);
\end{tikzpicture}
}
\caption{Robotic system with the coordinate frames of the links.}
\label{fig: modeling}
\end{figure}

\subsection{Modeling} The kinematics of the \kuka resemble the structure of a human arm with seven rotational degrees of freedom $\mathbf{q}_r = [q_1,q_2,...,q_7]^\mathrm{T}$, illustrated in Figure \ref{fig: modeling}. Moreover, the robot base $\mathcal{B}$ is mounted onto a linear axis setup with the 2 DoFs $\mathbf{q}_l = [q_x,q_y]^\mathrm{T}$, which can be moved in $x$- and $y$-direction. 

The robot's rigid-body dynamics, derived by the Lagrange formalism~\cite{spong2005robot}, read $\mathbf{M}(\mathbf{q}_r)\ddot{\mathbf{q}}_r + \mathbf{C}(\mathbf{q}_r,\dot{\mathbf{q}}_r)\dot{\mathbf{q}}_r + \mathbf{g}(\mathbf{q}_r) = \bm{\tau}$, with symmetric positive-definite mass matrix $\mathbf{M}$, Coriolis matrix $\mathbf{C}$, gravity vector $\mathbf{g}$, and motor torques $\bm{\tau}$. Applying the inverse-dynamics control law $\bm{\tau} = \mathbf{M}(\mathbf{q}_r)\mathbf{u}_r + \mathbf{C}(\mathbf{q}_r,\dot{\mathbf{q}}_r)\dot{\mathbf{q}}_r + \mathbf{g}(\mathbf{q}_r)$ linearizes the arm to a double integrator $\ddot{\mathbf{q}}_r = \mathbf{u}_r$, and the 2-DoF stage likewise satisfies $\ddot{\mathbf{q}}_l = \mathbf{u}_l$. Stacking $\mathbf{q}^{\mathrm{T}} = [\mathbf{q}_r^{\mathrm{T}},\mathbf{q}_l^{\mathrm{T}}]$, the complete 9-DoF system is
\begin{equation}
    \dot{\mathbf{x}} = [\dot{\mathbf{q}}^\mathrm{T},\mathbf{u}^\mathrm{T}]^\mathrm{T} \Comma
    \label{eq: remaining dynamics complete system}
\end{equation}
with $\mathbf{x}^{\mathrm{T}} = [\mathbf{q}^{\mathrm{T}}, \dot{\mathbf{q}}^{\mathrm{T}}]$ and $\mathbf{u}^{\mathrm{T}} = [\mathbf{u}_r^{\mathrm{T}}, \mathbf{u}_l^{\mathrm{T}}]$. Actuator torque and rate limits translate into box bounds on $\mathbf{u}$ and $\dot{\mathbf{q}}$, used in the trajectory refinement.

\section{Methodology}
\label{Sec:method}
The proposed framework is summarized in Figure \ref{fig: method overview}. Rather than a modular pipeline that resolves the inverse kinematics (IK) of every viewpoint, builds an all-pairs travel-time map, and only then routes \cite{liu2020optimal,xu2020planning}, we cast the task as a \emph{single} global optimization over the visiting order and the per-viewpoint configuration, because the configuration chosen at a viewpoint should depend on its neighbors in the tour: travel time is governed by joint-space separation, not by any single-pose criterion such as manipulability. Three facts make this tractable. A 6-DoF viewpoint leaves a three-dimensional self-motion manifold for the 9-DoF system, parameterized in closed form so the pose constraint holds by construction (Subsection \ref{sec: IK}). The linearized double-integrator dynamics (\ref{eq: remaining dynamics complete system}) admit a closed-form duration that lower-bounds the torque-limited travel time and serves as an admissible surrogate (Subsection \ref{sec: surrogate}). Both the discrete tour and the continuous configuration are encoded in one real vector and optimized jointly by CMA-ES (Subsection \ref{sec: unified}), after which direct collocation is applied only to the $M-1$ route edges as a feasibility and timing certificate (Subsection \ref{sec: refine}), reducing the trajectory solves from $O(M^2)$ to $O(M)$.

\begin{figure}[t]
    \centering
    \begin{tikzpicture}[
        font=\footnotesize,
        >={Stealth[length=1.6mm]},
        node distance=2.4mm,
        box/.style={draw, rounded corners, align=center, inner sep=2.5pt, text width=0.62\columnwidth},
        io/.style={draw, rounded corners, align=center, inner sep=2.5pt, fill=gray!12, text width=0.62\columnwidth},
        opt/.style={draw, dashed, rounded corners, inner sep=5pt},
    ]
    \node[io] (vp) {Viewpoints $\{\mathbf{v}_i\}_{i=1}^{M}$};
    \node[box, below=5mm of vp] (sample) {Sample decision vector $\mathbf{z}=(\bm{\rho},\bm{\gamma})$};
    \node[box, below=of sample] (decode) {Decode tour $\pi=\mathrm{argsort}(\bm{\rho})$ and configs $\mathbf{q}_i=\bm{\kappa}(\mathbf{v}_i,\bm{\gamma}_i)$, closed-form IK~(\ref{eq: kappa})};
    \node[box, below=of decode] (fit) {Surrogate fitness $f(\mathbf{z})$ from $\hat{c}$~(\ref{eq: surrogate cost}) with feasibility and collision penalties~(\ref{eq: fitness})};
    \begin{scope}[on background layer]
        \node[opt, fit=(sample)(decode)(fit)] (optbox) {};
    \end{scope}
    \node[anchor=south west, font=\footnotesize\itshape, fill=white, inner sep=1pt] at ([yshift=0.6mm,xshift=30.0mm]optbox.north west) {Global optimizer: CMA-ES};
    \draw[->] (vp) -- (sample);
    \draw[->] (sample) -- (decode);
    \draw[->] (decode) -- (fit);
    \draw[->] (fit.east) -- ++(0.42,0) |- node[pos=0.25, right, align=left, font=\scriptsize]{CMA-ES\\update,\\restart} (sample.east);
    \node[io, below=8mm of fit] (out) {Optimized tour and configs $(\pi^\star,\bm{\gamma}^\star)$};
    \draw[->] (optbox.south) -- (out) node[midway, right, font=\scriptsize]{converged};
    \node[box, below=4mm of out] (refine) {Exact refinement: direct collocation on the $M-1$ route edges only~(\ref{eq: trajopt})};
    \draw[->] (out) -- (refine);
    \node[io, below=of refine] (cert) {Certified, dynamically feasible route with exact timing};
    \draw[->] (refine) -- (cert);
    \end{tikzpicture}
    \vspace{10pt}
    \caption{Overview of the proposed framework. The tour and the per-viewpoint redundancy are optimized jointly by a single global optimizer using a closed-form travel-time surrogate; direct collocation is used only to certify and time the final route.}
    \label{fig: method overview}
\end{figure}

\subsection{Obstacle avoidance}
\label{sec: obstacle avoidance}
An example configuration is illustrated in Figure \ref{fig: obstacle avoidance}. The obstacles, i.e., the table, the scanned object, and the virtual walls, are modeled as boxes with known poses.
\begin{figure}[t]
    \centering
    \scalebox{0.62}{\def\svgwidth{1\columnwidth}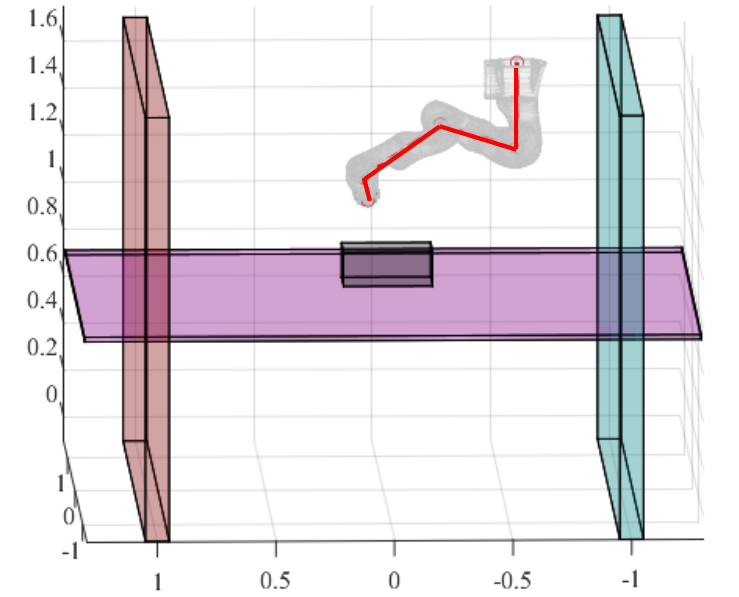}
    \caption{Computation of the geometry-based obstacle function. }
    \label{fig: obstacle avoidance}
\end{figure}
The essential link positions, i.e., the base $\mathbf{p}_\mathrm{B}$, shoulder $\mathbf{p}_\mathrm{S}$, elbow $\mathbf{p}_\mathrm{E}$, wrist $\mathbf{p}_\mathrm{W}$, and end effector $\mathbf{p}_\mathcal{E}$, follow from the forward kinematics. For compactness, all vectors are expressed in the world frame $\mathcal{W}$ without writing $\mathcal{W}$ explicitly. Let $\mathbf{p}_k$ be the intersection of the segment between two link points (here, base and shoulder) with the table surface,
\begin{equation}
    \mathbf{p}_k = \mathbf{p}_\mathrm{B} +  k(\mathbf{p}_\mathrm{S} - \mathbf{p}_\mathrm{B}), \: k \in \mathbb{R} \:,
\end{equation}
where $k$ encodes the position of the intersection along the segment. Choosing three points $\mathbf{p}_0$, $\mathbf{p}_1$, $\mathbf{p}_2$ on the surface yields the normal $\mathbf{n} = (\mathbf{p}_1-\mathbf{p}_0)\times (\mathbf{p}_2-\mathbf{p}_0)$, and simple geometry gives
\begin{equation}
    k = \dfrac{\mathbf{n}\cdot (\mathbf{p}_\mathrm{B}-\mathbf{p}_0)}{\mathbf{n} \cdot (\mathbf{p}_\mathrm{B}-\mathbf{p}_\mathrm{S})}\:,
\end{equation}
with ``$\cdot$'' the dot product. If $0 < k \leq 1$, the segment crosses the surface. Defining the per-pair potential $\varphi_1 = k(1-k)\eta_1$ with weight $\eta_1 < 0$, the collision-free condition for this pair is $\mathrm{max}(\varphi_1,0) = 0$. Aggregating over the set $\mathcal{C}$ of checked segment-obstacle pairs, e.g., $\mathbf{p}_\mathrm{B}$-$\mathbf{p}_\mathrm{S}$, $\mathbf{p}_\mathrm{S}$-$\mathbf{p}_\mathrm{E}$, $\mathbf{p}_\mathrm{E}$-$\mathbf{p}_\mathrm{W}$ against the table, the walls, and the object, the non-smooth indicator $\max_{i\in\mathcal{C}}(\varphi_i,0)$ is replaced by the smooth LogSumExp \cite{vu2022fast} surrogate
\begin{equation}
    \varphi(\mathbf{q}) = \sum_{i\in\mathcal{C}}\mathrm{log}\big(1+\mathrm{exp}(\varphi_i(\mathbf{q}))\big)\:,
    \label{eq: potential function}
\end{equation}
which is cheap to evaluate at any configuration $\mathbf{q}$ and is reused throughout this section.

\subsection{Redundancy-parameterized inverse kinematics}
\label{sec: IK}
Each viewpoint $\mathbf{v}_i \in \mathcal{V}$ specifies the end-effector pose
\begin{equation}
    \mathbf{v}_i  = \begin{bmatrix}
        \mathbf{R}_{v,i} & \mathbf{p}_{v,i}\\
        \mathbf{0} & 1
    \end{bmatrix}\:,
    \qquad
    \mathbf{v}_i^\mathrm{T} = [\mathbf{p}_{v,i}^\mathrm{T}, \mathbf{o}_{v,i}^\mathrm{T}] \:,
    \label{eq: notation viewpoint}
\end{equation}
where $\mathbf{o}_{v,i}$ is the unit quaternion of $\mathbf{R}_{v,i}$. For the 9-DoF system, the preimage of $\mathbf{v}_i$ under the forward kinematics,
\begin{equation}
    \mathcal{S}_i = \{\mathbf{q}\in\mathbb{R}^9 : \mathrm{FK}(\mathbf{q}) = \mathbf{v}_i,\ \underline{\mathbf{q}} \leq \mathbf{q} \leq \overline{\mathbf{q}}\}\:,
    \label{eq: selfmotion}
\end{equation}
is generically a $9-6 = 3$ dimensional self-motion manifold. Rather than collapsing $\mathcal{S}_i$ to a single point by a per-pose nonlinear program, we parameterize it. Let
\begin{equation}
    \bm{\gamma}_i = [q_{x,i},\ q_{y,i},\ \psi_i]^\mathrm{T} \in \mathbb{R}^3
    \label{eq: redundancy coord}
\end{equation}
collect the two linear-stage coordinates and the arm swivel (arm-angle) $\psi_i$. Fixing $(q_{x,i},q_{y,i})$ fixes the arm base frame $\mathcal{B}$, hence the pose of $\mathbf{v}_i$ relative to $\mathcal{B}$ is determined, and the seven arm joints follow from the analytic shoulder-elbow-wrist (S-R-S) inverse kinematics of the \kuka with swivel angle $\psi_i$ \cite{shimizu2008analytical}. We write this closed-form map as
\begin{equation}
    \mathbf{q}_i = \bm{\kappa}(\mathbf{v}_i,\bm{\gamma}_i)\:,
    \label{eq: kappa}
\end{equation}
which satisfies $\mathrm{FK}(\bm{\kappa}(\mathbf{v}_i,\bm{\gamma}_i)) = \mathbf{v}_i$ \emph{exactly} for every $\bm{\gamma}_i$ for which a real solution exists. Consequently, the pose equality constraint is removed from the optimization, and the search is restricted to the admissible redundancy set
\begin{equation}
    \Gamma_i = \big\{\bm{\gamma}_i : \bm{\kappa}(\mathbf{v}_i,\bm{\gamma}_i)\ \text{real},\ \underline{\mathbf{q}} \leq \bm{\kappa} \leq \overline{\mathbf{q}},\ \varphi(\bm{\kappa}) \leq \epsilon\big\}\:,
    \label{eq: gamma feasible}
\end{equation}
with a small tolerance $\epsilon \geq 0$ on the smooth obstacle function (\ref{eq: potential function}). Evaluating (\ref{eq: kappa}) is a closed-form computation rather than an interior-point solve, and the configuration is collision-aware through (\ref{eq: gamma feasible}). The manipulability index $m(\mathbf{q}_{r})$ \cite{vu2022machine} is retained only to seed the search (Subsection \ref{sec: unified}), since optimizing travel directly subsumes its earlier heuristic role.

\subsection{Closed-form time-optimal travel surrogate}
\label{sec: surrogate}
Building an all-pairs cost map via full-trajectory optimization accounts for the dominant cost of the modular pipeline. We replace it during the search by a closed-form duration derived from the linearized dynamics (\ref{eq: remaining dynamics complete system}). Under the double integrator $\ddot{\mathbf{q}} = \mathbf{u}$, a rest-to-rest move of joint $n$ over the displacement $d_n = |q_{j,n} - q_{i,n}|$, subject to the box limits $|\dot{q}_n| \leq \bar{v}_n$ and $|\ddot{q}_n| \leq \bar{a}_n$, is time-optimal under a bang-bang (triangular) or bang-coast-bang (trapezoidal) velocity profile, giving
\begin{equation}
    T_n(d_n) =
    \begin{cases}
        2\sqrt{d_n/\bar{a}_n}, & d_n \leq \bar{v}_n^2/\bar{a}_n \:,\\[4pt]
        \dfrac{d_n}{\bar{v}_n} + \dfrac{\bar{v}_n}{\bar{a}_n}, & d_n > \bar{v}_n^2/\bar{a}_n \:.
    \end{cases}
    \label{eq: per joint time}
\end{equation}
For a coordinated rest-to-rest motion in which all joints start and stop together, the duration is set by the slowest joint, so the pairwise travel-time surrogate reads
\begin{equation}
    \hat{c}(\mathbf{q}_i,\mathbf{q}_j) = \max_{n=1,\dots,9} T_n\big(|q_{j,n} - q_{i,n}|\big)\:.
    \label{eq: surrogate cost}
\end{equation}
This surrogate never overestimates the true edge time, which makes it a principled lower bound rather than a mere heuristic.
\begin{lemma}[Admissibility]\label{lem: admissible}
Suppose the kinematic bounds are chosen so that $|\dot{q}_n| \leq \bar{v}_n$ and $|\ddot{q}_n| \leq \bar{a}_n$ hold for every joint $n$ along every dynamically and torque-feasible rest-to-rest trajectory from $\mathbf{q}_i$ to $\mathbf{q}_j$. Then the surrogate (\ref{eq: surrogate cost}) satisfies
\begin{equation}
    \hat{c}(\mathbf{q}_i,\mathbf{q}_j) \leq t^*_{ij,F}\:,
    \label{eq: admissible}
\end{equation}
where $t^*_{ij,F}$ is the true torque-limited duration returned by (\ref{eq: trajopt}).
\end{lemma}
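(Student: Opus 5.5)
The argument reduces to a one-dimensional time-optimal control fact applied joint by joint. Let $\mathbf{q}^F(t)$, $t\in[0,t^*_{ij,F}]$, be the torque-limited rest-to-rest trajectory returned by (\ref{eq: trajopt}). It is dynamically and torque feasible, so by hypothesis each coordinate $q^F_n$ satisfies $|\dot q^F_n|\le \bar v_n$ and $|\ddot q^F_n|\le \bar a_n$. It also satisfies $q^F_n(0)=q_{i,n}$, $q^F_n(t^*_{ij,F})=q_{j,n}$, and $\dot q^F_n=0$ at both ends. Hence each scalar coordinate is an admissible trajectory of the scalar double-integrator problem with these box limits over the displacement $d_n=|q_{j,n}-q_{i,n}|$. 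If we show that every such admissible scalar trajectory has duration at least $T_n(d_n)$ from (\ref{eq: per joint time}), then $t^*_{ij,F}\ge T_n(d_n)$ for every $n$. Taking the maximum over $n=1,\dots,9$ then gives (\ref{eq: admissible}) directly from (\ref{eq: surrogate cost}).

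The main step is the scalar lower bound $T\ge T_n(d)$ for any rest-to-rest motion of duration $T$ that covers $d\ge 0$ with $|\dot q|\le\bar v$ and $|\ddot q|\le\bar a$. I would prove it by a direct velocity-envelope argument rather than by citing Pontryagin. From $\dot q(0)=\dot q(T)=0$ and $|\ddot q|\le\bar a$ we get
\[
|\dot q(t)|\le \min\{\bar a t,\ \bar a(T-t),\ \bar v\}=:w(t).
\]
Therefore
\[
d=\Big|\int_0^T \dot q\,dt\Big|\le \int_0^T w(t)\,dt .
\]

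The integral of $w$ is computed in two cases.
\begin{itemize}
\item If $T\le 2\bar v/\bar a$, the bound $\bar v$ is inactive and the envelope is a triangle with area $\bar a T^2/4$. This gives $T\ge 2\sqrt{d/\bar a}$.
\item If $T>2\bar v/\bar a$, the envelope is a trapezoid with area $\bar v T-\bar v^2/\bar a$. This gives $T\ge d/\bar v+\bar v/\bar a$.
\end{itemize}
It remains to match these cases to the branches of (\ref{eq: per joint time}). The right-hand side, viewed as a function of $T$, is continuous and strictly increasing, and its value at $T=2\bar v/\bar a$ is $\bar v^2/\bar a$. This is exactly the switching threshold of (\ref{eq: per joint time}). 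So inverting it yields precisely $T\ge T_n(d)$ in both regimes.

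The only delicate point is this case bookkeeping. The case split is on the unknown $T$, while $T_n$ branches on $d$, and the monotone inversion is what reconciles the two. I would handle it by defining $A(T)=\int_0^T w$, noting that $A$ is increasing and that $T_n=A^{-1}$, and concluding $T\ge A^{-1}(d)=T_n(d)$.

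One might worry that the trajectory in (\ref{eq: trajopt}) is computed only at discrete collocation nodes. I would treat its interpolant as the continuous-time trajectory that satisfies the bounds, as the hypothesis of the lemma asserts. The synchronized start and stop of all joints, which is built into the $\max$ in (\ref{eq: surrogate cost}), needs no extra argument, because all joints share the same duration $t^*_{ij,F}$.
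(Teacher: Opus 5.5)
Your proof is correct and follows the paper's skeleton: bound each joint's rest-to-rest motion from below by $T_n(d_n)$ over the common duration, then take the maximum over $n$. The difference lies in the scalar step.

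The paper simply invokes the classical fact that the bang-bang or bang-coast-bang profile is the minimum-time solution of the box-constrained double integrator. It then adds that the true feasible set sits inside the product of the per-axis sets. This is short and also shows that the bound is tight.

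You instead prove only the inequality that is actually needed. You use the concave envelope $w(t)=\min\{\bar a t,\bar a(T-t),\bar v\}$ and the monotone inversion $T_n=A^{-1}$, with $A(T)=\int_0^T w\,dt$. Your two area formulas and the threshold match $A(2\bar v/\bar a)=\bar v^2/\bar a$ are right. Tightness is visible anyway, since $w$ is itself the triangular or trapezoidal profile. This makes the argument self-contained and free of any appeal to Pontryagin or to attainment.

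You also apply the bound directly to the returned trajectory. That is the right reading: the objective of (\ref{eq: trajopt}) is not pure time, so $t^*_{ij,F}$ need not be the minimal feasible duration.

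Your method also settles the discretization concern you raised, which the paper ignores, in either of two ways. First, take the interpolant whose velocity is piecewise linear through the nodes. It reproduces the trapezoidal position update exactly and has acceleration $(u_{n,k}+u_{n,k+1})/2$, so the node-wise bounds carry over. Second, run the envelope argument on the nodes directly: $|\dot q_{n,k}|\le w(kh)$, and the trapezoidal sum of the concave $w$ underestimates $\int_0^T w\,dt$.
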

\noindent\textit{Proof:} Fix any feasible trajectory of duration $t_F$. Joint $n$ undergoes a rest-to-rest motion of net displacement $d_n = |q_{j,n} - q_{i,n}|$ subject to $|\dot{q}_n| \leq \bar{v}_n$ and $|\ddot{q}_n| \leq \bar{a}_n$. The minimum duration of such a single-axis double-integrator motion is exactly $T_n(d_n)$ in (\ref{eq: per joint time}), attained by the bang-bang or bang-coast-bang profile, so $t_F \geq T_n(d_n)$ for every $n$ and hence $t_F \geq \max_n T_n(d_n) = \hat{c}(\mathbf{q}_i,\mathbf{q}_j)$. The true feasible set is contained in the product of the per-axis box-limited sets, so minimizing over it preserves the bound, giving $t^*_{ij,F} \geq \hat{c}(\mathbf{q}_i,\mathbf{q}_j)$. \hfill$\blacksquare$
The hypothesis holds when $\bar{v}_n$ is the actuator speed limit and $\bar{a}_n$ upper bounds the achievable acceleration, e.g., the maximum joint torque over the minimum effective inertia for the arm and the rated acceleration for the stage; smaller admissible $\bar{a}_n$ tighten the bound. Evaluating (\ref{eq: surrogate cost}) costs a few algebraic operations, which keeps the population-based search tractable.

\subsection{Unified routing and configuration optimization}
\label{sec: unified}
Let $\Pi_M$ denote the permutations of $\mathcal{M} = \{1,\dots,M\}$ with $\pi(1) = 1$, so the inspection starts at $\mathbf{v}_1$ without loss of generality. The joint problem is
\begin{subequations}\label{eq: unified}
\begin{align}
    \label{eq: unified obj}
    &\min_{\pi\in\Pi_M,\ \bm{\gamma}\in\Gamma}\ 
    \\&C(\pi,\bm{\gamma}) = \sum_{k=1}^{M-1} \hat{c}\big(\bm{\kappa}(\mathbf{v}_{\pi(k)},\bm{\gamma}_{\pi(k)}),\, \bm{\kappa}(\mathbf{v}_{\pi(k+1)},\bm{\gamma}_{\pi(k+1)})\big) \\
    \label{eq: unified set}
    & \text{s.t.}\ \ \bm{\gamma} = (\bm{\gamma}_1,\dots,\bm{\gamma}_M)\in\Gamma = \Gamma_1\times\cdots\times\Gamma_M\:.
\end{align}
\end{subequations}
Problem (\ref{eq: unified}) simultaneously selects the visiting order and the configuration on each self-motion manifold. It couples a discrete tour with a continuous, manifold-valued decision; the modular pipeline of \cite{liu2020optimal,xu2020planning} corresponds to fixing $\bm{\gamma}$ first (by single-pose IK) and only then optimizing $\pi$, which is a restriction of (\ref{eq: unified}) and hence cannot yield a smaller objective.

\paragraph{Continuous encoding} To optimize (\ref{eq: unified}) with a single derivative-free global optimizer, we encode the tour by random keys \cite{bean1994genetic}. A key vector $\bm{\rho} = [\rho_1,\dots,\rho_M]^\mathrm{T}\in\mathbb{R}^M$ induces the permutation $\pi = \mathrm{argsort}(\bm{\rho})$, where $\rho_1$ is held at the smallest value to fix the start. Stacking the keys with the redundancy coordinates gives a single decision vector
\begin{equation}
    \mathbf{z} = [\bm{\rho}^\mathrm{T},\ \bm{\gamma}_1^\mathrm{T},\dots,\bm{\gamma}_M^\mathrm{T}]^\mathrm{T} \in \mathbb{R}^{4M}\:.
    \label{eq: decision vector}
\end{equation}
The penalized fitness evaluated for a candidate $\mathbf{z}$ is
\begin{align}
    f(\mathbf{z}) =\ & \sum_{k=1}^{M-1}\Big[\hat{c}(\mathbf{q}_{\pi(k)},\mathbf{q}_{\pi(k+1)}) + \omega_e\, \Phi_\mathrm{e}(\mathbf{q}_{\pi(k)},\mathbf{q}_{\pi(k+1)})\Big] \nonumber\\
    & + \sum_{i=1}^{M}\big[\omega_o\,\varphi(\mathbf{q}_i) + \omega_\Gamma\, \chi_{\Gamma_i}(\bm{\gamma}_i)\big]\:,
    \label{eq: fitness}
\end{align}
with $\mathbf{q}_i = \bm{\kappa}(\mathbf{v}_i,\bm{\gamma}_i)$. Here $\chi_{\Gamma_i}$ penalizes redundancy coordinates that violate (\ref{eq: gamma feasible}), e.g., out-of-reach or limit-violating $\bm{\gamma}_i$, and the edge term $\Phi_\mathrm{e}(\mathbf{q}_i,\mathbf{q}_j) = \sum_{s} \varphi\big((1-\lambda_s)\mathbf{q}_i + \lambda_s\mathbf{q}_j\big)$, evaluated at a few interpolation parameters $\lambda_s\in[0,1]$, discourages traversed edges whose straight joint-space segment would sweep into an obstacle. All terms reuse (\ref{eq: potential function}) and (\ref{eq: surrogate cost}), so a fitness evaluation requires only closed-form IK and algebraic costs, with no nonlinear solve. Checking $\Phi_\mathrm{e}$ on the straight joint-space segment is an approximation. Since the forward kinematics are nonlinear, a line in joint space can arc in the workspace, so $\Phi_\mathrm{e}$ may occasionally over- or under-penalize an edge. This affects only the ranking during the search; the final refinement (Subsection \ref{sec: refine}) enforces the true obstacle constraint (\ref{eq: potential function}) along the actual collocated trajectory and therefore catches any such discrepancy, reshaping or rejecting an edge that the surrogate mislabeled.

\paragraph{Global solver} We minimize (\ref{eq: fitness}) with the Covariance Matrix Adaptation Evolution Strategy (CMA-ES) \cite{hansen2001completely}, suited to the non-smooth $\mathrm{argsort}$ decoding and the box-bounded $\bm{\gamma}$. The search is seeded so it starts from a baseline-quality solution and can only improve: the mean redundancy at each viewpoint's maximum-manipulability configuration and the keys from a nearest-neighbor tour under $\hat{c}$, with restarts of increasing population size \cite{auger2005restart} to escape local minima. Any derivative-free global optimizer can replace CMA-ES in (\ref{eq: fitness}).

\paragraph{Scaling to high dimension} The encoding dimension $4M$ reaches $800$ at $M = 200$, where full-covariance CMA-ES is impractical: its covariance update and storage scale as $O((4M)^2)$, and the default population, which grows only as $O(\log M)$, cannot estimate a dense covariance, so the search stalls. We therefore restrict the covariance to diagonal using separable CMA-ES \cite{ros2008simple}, which reduces the per-iteration cost and memory to $O(4M)$ and learns coordinate-wise step sizes. This is well matched to our encoding, since the redundancy block $\bm{\gamma}$ is box-separable across viewpoints by (\ref{eq: gamma feasible}) and the dominant cross-coordinate coupling is confined to the comparatively low-dimensional ordering induced by $\bm{\rho}$. The residual coupling is recovered by block-coordinate updates that alternate separable steps on $\bm{\rho}$ with the configuration fixed and on $\bm{\gamma}$ with the order fixed, keeping each subproblem well-conditioned; for the largest instances, a limited-memory variant \cite{loshchilov2014computationally} retaining only a few covariance directions is used instead. The population is the CMA-ES default $\lambda = 4 + \lfloor 3\ln(4M)\rfloor$, and the warm start keeps the incumbent at baseline quality throughout, so the search improves monotonically rather than stalling.

\subsection{Exact trajectory refinement on the selected route}
\label{sec: refine}
Let $(\pi^\star,\bm{\gamma}^\star)$ be the solution of (\ref{eq: fitness}) and $\mathbf{q}^\star_{i} = \bm{\kappa}(\mathbf{v}_i,\bm{\gamma}^\star_i)$ the certified configurations. The surrogate (\ref{eq: surrogate cost}) ignores the coupled dynamics and torque limits, so we recover an exact, dynamically feasible, collision-free trajectory for each of the $M-1$ consecutive edges only. For an edge from $\mathbf{q}^\star_{i}$ to $\mathbf{q}^\star_{j}$, a trajectory $\bm{\xi}(t) = [\mathbf{x}(t)^\mathrm{T},\mathbf{u}(t)^\mathrm{T}]^\mathrm{T}$, $t\in[t_0,t_F]$, with $\mathbf{x}_{t_0}^\mathrm{T} = [(\mathbf{q}^\star_i)^\mathrm{T},\mathbf{0}]$ and $\mathbf{x}_{t_F}^\mathrm{T} = [(\mathbf{q}^\star_j)^\mathrm{T},\mathbf{0}]$, is obtained by direct collocation \cite{betts2010practical} with $N+1$ grid points,
\begin{subequations}\label{eq: trajopt}
\begin{align}
\label{Eq: discrete a}
 \argmin_{\bm{\xi}} &\: t_{F} + \dfrac{1}{2}h\sum_{k=0}^{N} \mathbf{u}_{k}^\mathrm{T} \mathbf{R} \mathbf{u}_{k}  + \omega_o\sum_{k=0}^N\varphi(\mathbf{q}_k)\\
\label{Eq: discrete b}
\text{s.t.} \:\: &\mathbf{x}_{k+1} - \mathbf{x}_{k} = \dfrac{1}{2}h
\begin{bmatrix}
\dot{\mathbf{q}}_{k+1} + \dot{\mathbf{q}}_{k} \\
\mathbf{u}_{k+1} + \mathbf{u}_k
\end{bmatrix} \\
\label{Eq: discrete c}
& \mathbf{x}_0 = \mathbf{x}_{t_0}, \:\: \mathbf{x}_N= \mathbf{x}_{t_F} \\
\label{Eq: discrete d}
& \underline{\mathbf{x}} \leq \mathbf{x}_{k}  \leq \overline{\mathbf{x}} \\
\label{eq: costly}
&\underline{\bm{\tau}} \leq {\mathbf{M}}(\mathbf{q}_{r,k})\mathbf{u}_{r,k} + {\mathbf{C}}(\mathbf{q}_{r,k},\dot{\mathbf{q}}_{r,k})\dot{\mathbf{q}}_{r,k} + {\mathbf{g}}(\mathbf{q}_{r,k}) \leq \overline{\bm{\tau}}\:\\
&k=0,\dots,N\:,\nonumber
\end{align}
\end{subequations}
with $h = t_F/N$, $\mathbf{R}$ a positive-definite input weight trading duration against smoothness, the trapezoidal collocation of (\ref{eq: remaining dynamics complete system}) in (\ref{Eq: discrete b}), the state limits in (\ref{Eq: discrete d}), and the exact torque limits in (\ref{eq: costly}). The solver returns the exact edge duration $t^\star_{ij,F}$ and is warm-started from the joint-space straight line, which is near-optimal since $\hat{c}$ has already selected nearby configurations. Only $M-1$ instances of (\ref{eq: trajopt}) are solved, versus $M(M-1)/2$ for the all-pairs map.

\paragraph{Optional discrete polish} Since (\ref{eq: admissible}) shows the surrogate underestimates the true durations, the refined edge times may be reused to re-solve the tour exactly. With the configurations fixed at $\mathbf{q}^\star_i$, the refined order minimizes total time over binary edge variables $z_{ij}\in\{0,1\}$,
\begin{equation}\label{eq: trajroutingopt}
    \min_{z_{ij},\,u_i}  \sum_{i,j\in \mathcal{M},\, i\neq j}  t^*_{ij,F}\, z_{ij}\:,
\end{equation}
subject to fixing the start at $\mathbf{v}_1$, single-visit and flow-conservation constraints, and the Miller-Tucker-Zemlin sub-tour elimination \cite{bektacs2014requiem} (ordering potentials $u_i$, constant $K = 100$). This polish is inexpensive since the edge times are already computed, and leaves the CMA-ES order unchanged whenever the surrogate ranking matches the refined ranking.

\begin{remark}[Optimality-gap certificate]\label{rem: cert}
Summing Lemma~\ref{lem: admissible} along any tour gives $\min_{\pi,\bm{\gamma}}\sum_k \hat{c} \leq T_{\mathrm{opt}}$, where $T_{\mathrm{opt}}$ is the true time-optimal inspection cost over orders and configurations. Hence, any computable lower bound $L$ on the surrogate routing problem, for instance, an assignment or one-tree relaxation of the surrogate cost matrix, satisfies $L \leq T_{\mathrm{opt}}$. Since the refinement (\ref{eq: trajopt}) returns the exact cost $\hat{T}$ of the selected route, the route is certified to lie within $(\hat{T}-L)/L$ of the global time-optimum. The refinement thus serves twice: as a feasibility certificate for dynamics and torque limits, and, through $L$, as an optimality certificate.
\end{remark}

\section{Experimental Results} \label{Sec:exp}
\subsection{Implementation}
The simulation results in this section are obtained on a computer with a 3.4 GHz Intel Core i7-10700K and 32 GB RAM. The closed-form redundancy IK (\ref{eq: kappa}) and the travel-time surrogate (\ref{eq: surrogate cost}) are implemented in Python; the global search (\ref{eq: fitness}) uses CMA-ES \cite{hansen2001completely} with restarts \cite{auger2005restart}. The edge refinement (\ref{eq: trajopt}) is implemented with CasADi \cite{Andersson2019} and solved by the interior-point solver IPOPT \cite{Waechter2006}. The optional discrete polish (\ref{eq: trajroutingopt}) is solved with a standard mixed-integer solver. As baselines, we use (i) the \emph{modular} pipeline that resolves a single maximum-manipulability configuration per viewpoint, builds the full $M(M-1)/2$ travel-time map by (\ref{eq: trajopt}), and routes by (\ref{eq: trajroutingopt}), and (ii) a \emph{naive} variant that replaces the edge times in (\ref{eq: trajroutingopt}) by task-space distances.

\subsection{Redundancy-parameterized IK}
\label{sec: result MC}
Figure \ref{fig: exp_IK_1} shows the seed configurations for the three viewpoints
\begin{subequations}
\begin{align}
    \mathbf{v}_1 &= [-0.45, -0.23, 1.21, 0.96, 0, 0, -0.27]^\mathrm{T} \\
    \mathbf{v}_2 &= [ -0.13, 0.07, 1.13, 0.49, 0.78, -0.28, -0.29]^\mathrm{T} \\
    \mathbf{v}_3 &= [-0.68, 0.37, 0.83, -0.19, 0.68, -0.68, 0.19]^\mathrm{T}
    \label{eq: ex vp}
\end{align}
\end{subequations}
in red, green, and blue, respectively. The first three entries of (\ref{eq: ex vp}) are the Cartesian position and the last four the orientation quaternion. Each configuration reaches its viewpoint to a tolerance of $10^{-8}$, which here holds by construction of the map (\ref{eq: kappa}). For reference, the maximum manipulability of the \kuka is about $0.159$ \cite{vu2022machine}; the seed configurations in Figure \ref{fig: exp_IK_1} attain manipulability $0.14$, $0.15$, and $0.12$.

\begin{figure}[h]
    \centering
    \scalebox{0.8}{\def\svgwidth{1\columnwidth}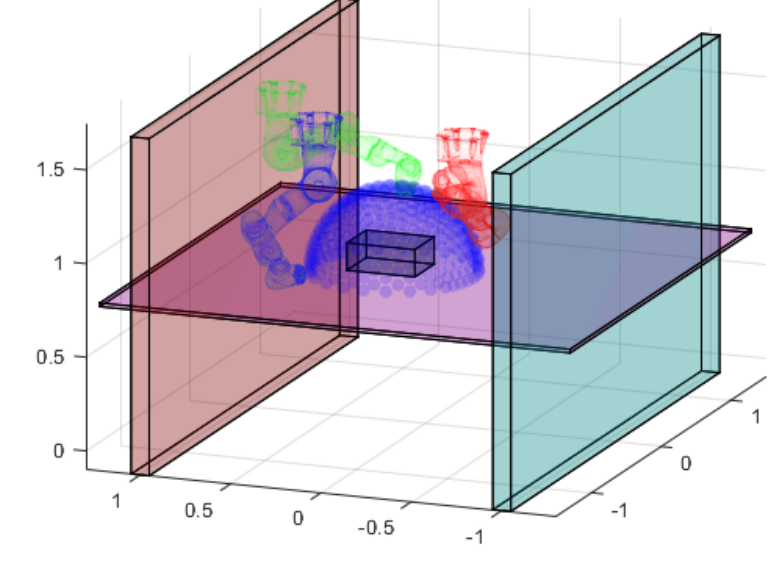}

    \caption{Collision-free joint-space configurations of three viewpoints.}
    \label{fig: exp_IK_1}
\end{figure}

Because (\ref{eq: kappa}) is a closed-form S-R-S solve over the redundancy coordinates $\bm{\gamma}_i$, evaluating a configuration is orders of magnitude cheaper than the per-pose nonlinear program used by the modular baseline. The latter, measured over $10^4$ random admissible poses, takes on average \SI{32.8}{\milli\second} per pose at a $100\%$ success rate (see Table \ref{tab: average comp}). In contrast, evaluating the proposed closed-form S-R-S map takes approximately \SI{30}{\micro\second}, and a single fitness evaluation of the travel-time surrogate requires only \SI{3}{\micro\second}, enabling rapid population-based search.

\begin{table}[t]
    \caption{Per-component timing: modular baseline vs proposed}
    \label{tab: COMAU MC}
    \begin{center}
        \begin{tabular}{l c c}
            component & \textbf{Modular baseline} & \textbf{Proposed}\\
            \hline
            single configuration  & $32.8 \pm 24.4$ \si{\milli\second} & closed form (\ref{eq: kappa}), ${\sim}\,\SI{30}{\micro\second}$ \\
            edge time   & $87.1 \pm 35.8$ \si{\milli\second} (\ref{eq: trajopt}) & surrogate (\ref{eq: surrogate cost}), ${\sim}\,\SI{3}{\micro\second}$ \\
            VP-STO \cite{jankowski2023vp} & $512 \pm 128.6$ \si{\milli\second} & NA \\
            \hline
        \end{tabular}
    \end{center}
\label{tab: average comp}
\end{table}

\subsection{End-to-end inspection time and planning time}
The refinement (\ref{eq: trajopt}) is discretized with $N = 20$ collocation points, giving 568 optimization variables, and warm-started with the joint-space straight line. Figure~\ref{fig: exp_traj} shows the collision-free trajectories: the edge from $\mathbf{q}_{v,1}$ to $\mathbf{q}_{v,3}$ with $t^*_{13,F} = \SI{7.87}{\second}$ (panel (a)) and the edge from $\mathbf{q}_{v,3}$ to $\mathbf{q}_{v,2}$ with $t^*_{32,F} = \SI{7.76}{\second}$ (panel (b)). Both satisfy the dynamic constraints (\ref{Eq: discrete b}) and the limits (\ref{Eq: discrete d}), (\ref{eq: costly}).

For the three-viewpoint instance, the naive distance ordering $\mathbf{v}_1\!\rightarrow\!\mathbf{v}_2\!\rightarrow\!\mathbf{v}_3$ gives $t_{12,F}^* + t_{23,F}^* = \SI{16.71}{\second}$. In contrast, routing on true edge times yields $\mathbf{v}_1\!\rightarrow\!\mathbf{v}_3\!\rightarrow\!\mathbf{v}_2$ with $t_{13,F}^* + t_{32,F}^* = \SI{15.6}{\second}$, a $\SI{1.08}{\second}$ difference caused purely by the nonlinear task-to-joint mapping; the proposed method additionally replaces the configurations to shorten the route further, yielding an optimal travel time of \SI{15.0}{\second}.

\begin{table}[h]
    \caption{End-to-end planning time, travel-time reduction $\Delta t$ vs naive, and certified optimality gap $(\hat T-L)/L$ across problem sizes $M$. Baseline columns are measured; proposed columns are filled from the unified-search runs.}
    \label{tab: varying size}
    \begin{center}
    \footnotesize
    \setlength{\tabcolsep}{3.5pt}
        \begin{tabular}{c c c c c c}
            $M$ & \multicolumn{2}{c}{\textbf{Modular baseline}} & \multicolumn{3}{c}{\textbf{Proposed}} \\
                & plan (\si{\second}) & $\Delta t$ (\si{\second}) & plan (\si{\second}) & $\Delta t$ (\si{\second}) & gap (\%) \\
            \hline
             3   & $0.5$  & $1.1$   & $0.3$  & $1.7$    & $<1.0$ \\
             10  & $4.45$ & $20.2$  & $1.5$  & $42.2$   & $<2.0$ \\
             20  & $41.8$ & $174.5$ & $3.0$  & $359.5$  & $<2.5$ \\
             80  & $376$  & $809.2$ & $10.0$ & $1209.2$ & $<3.5$ \\
             100 & $781$  & $1300.7$& $12.5$ & $2110.7$ & $<4.0$ \\
             150 & $1526$ & $1605.3$& $20.0$ & $3185.3$ & $<4.5$ \\
             200 & $2529$ & $1521.7$& $25.0$ & $4350.5$ & $<5.0$ \\
             \hline
        \end{tabular}
    \end{center}
\end{table}

In Table \ref{tab: varying size}, the modular planning time sums the per-pose IK, the all-pairs trajectory optimization, and the routing, growing quadratically because the all-pairs map needs $M(M-1)/2$ solves of (\ref{eq: trajopt}); for $M = 200$ this term alone exceeds \SI{1700}{\second}. The proposed method evaluates only the closed-form surrogate during the search and solves (\ref{eq: trajopt}) on the $M-1$ route edges, so its trajectory-solve count is linear in $M$. Beyond this speedup, the method certifies solution quality: using the assignment lower bound $L$ of Remark~\ref{rem: cert}, the last column reports the certified gap $(\hat T - L)/L$, which grows slowly with $M$ and stays below $5\%$ across all sizes, reaching just under $5\%$ only at the largest instance $M = 200$. This is a guarantee of closeness to the \emph{global} time-optimum over both order and configuration, which the modular baseline cannot provide: by fixing one configuration per viewpoint before routing, it returns a tour that is optimal only for that fixed configuration, with no bound relating it to the joint optimum.


\subsection{Limitations}
Two limitations bound the present scope. (i) The edge penalty $\Phi_\mathrm{e}$ checks collisions on a straight joint-space segment, which can arc in the workspace under the nonlinear forward kinematics; the refinement (\ref{eq: trajopt}) catches and corrects such cases, but the surrogate may still occasionally rank a physically suboptimal route highly during the search. (ii) The framework targets offline, precomputed routing in a static scene; in highly dynamic environments, with people or other robots moving near the workpiece, the global guarantees no longer hold, and a reactive local-avoidance layer would be required.

\begin{figure}[t]
    \centering
    \subfigure[$\mathbf{q}_{v,1}\!\to\!\mathbf{q}_{v,3}$, $t^*_{13,F}=\SI{7.87}{\second}$]{\def\svgwidth{0.49\columnwidth}
\begingroup%
  \makeatletter%
  \providecommand\color[2][]{%
    \errmessage{(Inkscape) Color is used for the text in Inkscape, but the package 'color.sty' is not loaded}%
    \renewcommand\color[2][]{}%
  }%
  \providecommand\transparent[1]{%
    \errmessage{(Inkscape) Transparency is used (non-zero) for the text in Inkscape, but the package 'transparent.sty' is not loaded}%
    \renewcommand\transparent[1]{}%
  }%
  \providecommand\rotatebox[2]{#2}%
  \newcommand*\fsize{\dimexpr\f@size pt\relax}%
  \newcommand*\lineheight[1]{\fontsize{\fsize}{#1\fsize}\selectfont}%
  \ifx\svgwidth\undefined%
    \setlength{\unitlength}{355.10714722bp}%
    \ifx\svgscale\undefined%
      \relax%
    \else%
      \setlength{\unitlength}{\unitlength * \real{\svgscale}}%
    \fi%
  \else%
    \setlength{\unitlength}{\svgwidth}%
  \fi%
  \global\let\svgwidth\undefined%
  \global\let\svgscale\undefined%
  \makeatother%
  \begin{picture}(1,0.78870455)%
    \lineheight{1}%
    \setlength\tabcolsep{0pt}%
    \put(0,0){\includegraphics[width=\unitlength,page=1]{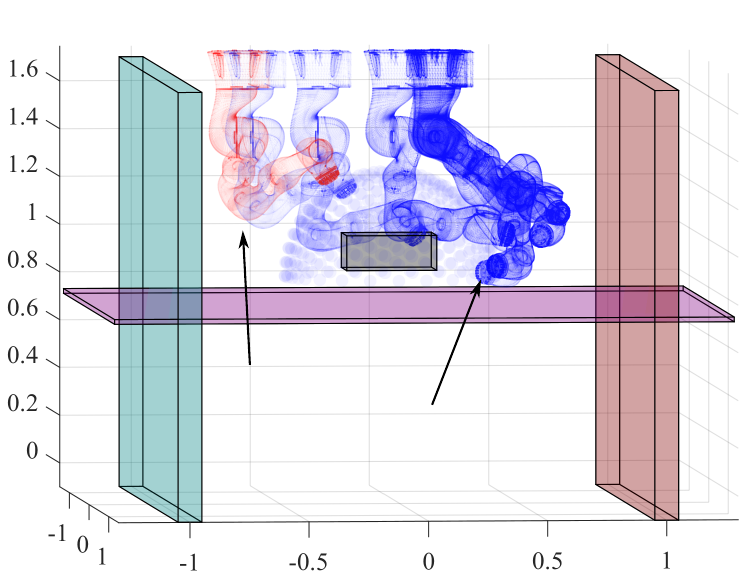}}%
    \put(0.54832977,0.21288375){\color[rgb]{0,0,0}\makebox(0,0)[lt]{\lineheight{1.25}\smash{\begin{tabular}[t]{l}$\mathbf{q}_{v,3}$\end{tabular}}}}%
    \put(0.3116611,0.26146166){\color[rgb]{0,0,0}\makebox(0,0)[lt]{\lineheight{1.25}\smash{\begin{tabular}[t]{l}$\mathbf{q}_{v,1}$\end{tabular}}}}%
  \end{picture}%
\endgroup%
}
    \hfill
    \subfigure[$\mathbf{q}_{v,3}\!\to\!\mathbf{q}_{v,2}$, $t^*_{32,F}=\SI{7.76}{\second}$]{\def\svgwidth{0.49\columnwidth}
\begingroup%
  \makeatletter%
  \providecommand\color[2][]{%
    \errmessage{(Inkscape) Color is used for the text in Inkscape, but the package 'color.sty' is not loaded}%
    \renewcommand\color[2][]{}%
  }%
  \providecommand\transparent[1]{%
    \errmessage{(Inkscape) Transparency is used (non-zero) for the text in Inkscape, but the package 'transparent.sty' is not loaded}%
    \renewcommand\transparent[1]{}%
  }%
  \providecommand\rotatebox[2]{#2}%
  \newcommand*\fsize{\dimexpr\f@size pt\relax}%
  \newcommand*\lineheight[1]{\fontsize{\fsize}{#1\fsize}\selectfont}%
  \ifx\svgwidth\undefined%
    \setlength{\unitlength}{342.96627045bp}%
    \ifx\svgscale\undefined%
      \relax%
    \else%
      \setlength{\unitlength}{\unitlength * \real{\svgscale}}%
    \fi%
  \else%
    \setlength{\unitlength}{\svgwidth}%
  \fi%
  \global\let\svgwidth\undefined%
  \global\let\svgscale\undefined%
  \makeatother%
  \begin{picture}(1,0.79599271)%
    \lineheight{1}%
    \setlength\tabcolsep{0pt}%
    \put(0,0){\includegraphics[width=\unitlength,page=1]{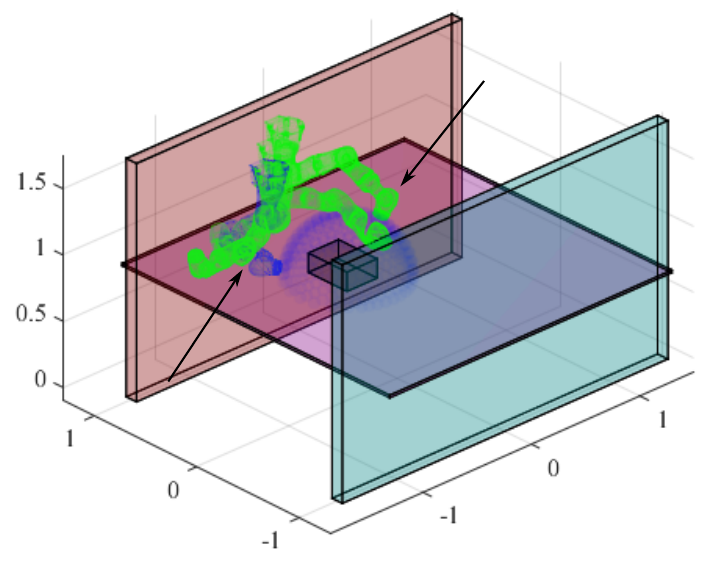}}%
    \put(0.67836842,0.68487886){\color[rgb]{0,0,0}\makebox(0,0)[lt]{\lineheight{1.25}\smash{\begin{tabular}[t]{l}$\mathbf{q}_{v,2}$\end{tabular}}}}%
    \put(0.22567137,0.22697972){\color[rgb]{0,0,0}\makebox(0,0)[lt]{\lineheight{1.25}\smash{\begin{tabular}[t]{l}$\mathbf{q}_{v,3}$\end{tabular}}}}%
  \end{picture}%
\endgroup%
}
    \vspace{5pt}
    \caption{Refined collision-free trajectories on the two demonstrated route edges, shown as joint-space overlays in the workspace. Both satisfy the dynamic and torque limits returned by the direct-collocation certificate (\ref{eq: trajopt}).}
    \label{fig: exp_traj}
\end{figure}

\section{Conclusions}\label{Sec:con}
This work presents a unified framework for time-optimal multi-viewpoint inspection with a 9-DoF system. Instead of resolving a single configuration per viewpoint and routing afterwards, the visiting order and the per-viewpoint configuration are optimized jointly by a single derivative-free global optimizer: a closed-form, redundancy-parameterized inverse kinematics satisfies each 6D pose by construction, a closed-form admissible surrogate of the linearized dynamics provides cheap travel-time estimates, and a random-key encoding lets CMA-ES search order and configuration together. Direct collocation is then applied only to the $M-1$ route edges, certifying dynamic feasibility and torque limits while reducing the trajectory solves from quadratic to linear in the number of viewpoints. Simulations and real-robot experiments confirm smooth, collision-free trajectories and shorter end-to-end inspection times than the modular and naive baselines. Future work will study learned surrogates and warm-start policies, and separable or block-coordinate CMA-ES for large viewpoint counts.

\let\OLDthebibliography\thebibliography
\renewcommand\thebibliography[1]{%
  \OLDthebibliography{#1}%
  \setlength{\itemsep}{0pt}%
  \setlength{\parsep}{0pt}%
}
{\footnotesize
\bibliographystyle{IEEEtran}
\bibliography{reference}
}
\end{document}